\documentclass[]{article} 
\usepackage[utf8]{inputenc}
\usepackage{tikz-cd}
\usepackage{amsthm, amsmath, amssymb, amsfonts}
\usepackage{mathrsfs}
\usepackage{microtype}
\usepackage{bbm}
\usepackage{mathtools}
\usepackage{dsfont}
\usepackage{caption}
\usepackage{url}
\usepackage{authblk}
\usepackage{pifont}
\usepackage[linewidth=1pt]{mdframed}
\usepackage{framed}
\usepackage{lipsum}
\usepackage{caption}
\usepackage{apptools}
\usepackage{graphicx}
\usepackage{wrapfig}
\usepackage{caption}
\usepackage{subcaption}
\usepackage{boldline,multirow}

\usepackage[T1]{fontenc}    
\usepackage{url}            
\usepackage{booktabs}       
\usepackage{amsfonts}       
\usepackage{nicefrac}       
\usepackage{microtype}      

\usepackage[
backend=biber,
natbib=true,
url=false, 
doi=true,
eprint=false
]{biblatex}
\addbibresource{references.bib}

\setlength{\affilsep}{2em}

\usepackage[nonatbib, preprint]{neurips_2019}
\usepackage{aliascnt}

\title{Copula \& Marginal Flows:\\ Disentangling the Marginal from its Joint}
\author[  \,\,,1,2]{Magnus Wiese\thanks{Corresponding author: \texttt{wiese@rhrk.uni-kl.de}}}
\author[2]{Robert Knobloch}
\author[1,2]{Ralf Korn}
\affil[1]{TU Kaiserslautern, Gottlieb-Daimler-Straße 48, 67663 Kaiserslautern, Germany
}
\affil[2]{Fraunhofer ITWM, Fraunhofer-Platz 1, 67663 Kaiserslautern, Germany
}
\date{\today}

\newcommand{\E}{\mathbb{E}}
\newcommand{\Prob}{\mathbb{P}}

\newcommand{\N}{\mathbb{N}}
\newcommand{\R}{\mathbb{R}}

\newcommand{\cond}{ \ | \ }
\newcommand{\normalbrack}[1]{\left( #1 \right)}
\newcommand{\sqbrack}[1]{\left[ #1 \right]}
\newcommand{\abs}[1]{\left| #1 \right|}

\newcommand{\norm}[1]{\left\|#1\right\|}

\newcommand{\Z}{\R^{d_0}}
\newcommand{\X}{\R^{d_1}}

\newcommand{\mathbfv}[1]{\mathbf}

\newtheorem{theorem}{Theorem}

\newaliascnt{corollary}{theorem}
\newtheorem{corollary}[corollary]{Corollary}

\aliascntresetthe{corollary}

\newaliascnt{definition}{theorem}
\theoremstyle{definition}
\newtheorem{definition}[definition]{Definition}

\aliascntresetthe{definition}
\newaliascnt{prop}{theorem}
\newtheorem{prop}[prop]{Proposition}

\aliascntresetthe{prop}
\newaliascnt{lemma}{theorem}
\newtheorem{lemma}[lemma]{Lemma}

\aliascntresetthe{lemma}
\newaliascnt{example}{theorem}
\theoremstyle{definition}
\newtheorem{example}[example]{Example}

\aliascntresetthe{example}
\newaliascnt{remark}{theorem}
\theoremstyle{remark}
\newtheorem{remark}[remark]{Remark}

\aliascntresetthe{remark}
\newaliascnt{notation}{theorem}
\theoremstyle{definition}

\aliascntresetthe{notation}
%


\newcommand{\Fbar}{\bar{F}}
\newcommand{\Fhat}{\hat{F}}
\newcommand{\thetab}{{\theta}}

\newcommand{\gen}{g_{\thetab}}
\newcommand{\geni}{g_{\thetab, i}}
\newcommand{\n}{N}
\newcommand{\z}{{Z}}
\newcommand{\x}{{X}}
\newcommand{\W}{\mathbf{W}}

\newcommand{\optgen}{\mathcal{G}^*}

\newcommand{\zrv}{$\z$}
\newcommand{\xrv}{$\x$}

\usepackage{apptools}

\begin{document}
	\maketitle

\begin{abstract}
	Deep generative networks such as GANs and normalizing flows flourish in the context of high-dimensional tasks such as image generation. However, so far an exact modeling or extrapolation of distributional properties such as the tail asymptotics generated by a generative network is not available. In this paper we address this issue for the first time in the deep learning literature by making two novel contributions. First, we derive upper bounds for the tails that can be expressed by a generative network and demonstrate $L^p$-space related properties. There we show specifically that in various situations an optimal generative network does not exist. Second, we introduce and propose \textit{copula and marginal generative flows} (\textit{CM flows}) which allow for an exact modeling of the tail and any prior assumption on the CDF up to an approximation of the uniform distribution. Our numerical results support the use of CM flows.
\end{abstract}

\section{Introduction}
Generative modeling is a major area in machine learning that studies the problem of estimating the distribution of a $\R^{d_1}$-valued random variable \xrv. One of the central areas of research in generative modeling is the model's (universal) applicability to different domains, i.e. whether the distribution of $X$ can be expressed by the generative model. Generative networks, such as generative adversarial networks (GANs) \cite{gans} and normalizing flows \cite{real_nvp,normalizing_flows}, comprise a relatively new class of unsupervised learning algorithms that are employed to learn the underlying distribution of \xrv \ by mapping an $\R^{d_0}$-valued independent and identically distributed (i.i.d.) random variable $Z$ through a parameterized generator $g_{{\theta}}: \Z \to\X$ to the support of the targeted distribution $\X$. Due to their astonishing results in numerous scientific fields - in particular image generation \cite{biggan, mescheder} - generative networks are generally considered to be able to model complex high-dimensional probability distributions that may lie on a manifold \cite{towards_principled}. This lets generative networks appear to be universally applicable generative models. We claim that this impression is misleading. 

In this paper we investigate to what extent this universal applicability is violated when dealing with distributions of random variables. Specifically we study the tail asymptotics generated by a generative network and the exact modeling of the tail. The right tail of \xrv \ is defined for $x \in \R $ and $i = 1,\dots, d_1$ as
\begin{equation*}
\Fbar_{X_i} (x) \coloneqq \Prob(X_i > x).
\end{equation*}
Similarly, the left tail is given by $\Fhat_{X_i}(x)\coloneqq \Prob(X_i < -x)$. The right tail function thus represents the probability that the random variable $X_i$ is greater than $x\in\R$.
\begin{figure}[htp]
	\centering
	\includegraphics[width=\textwidth]{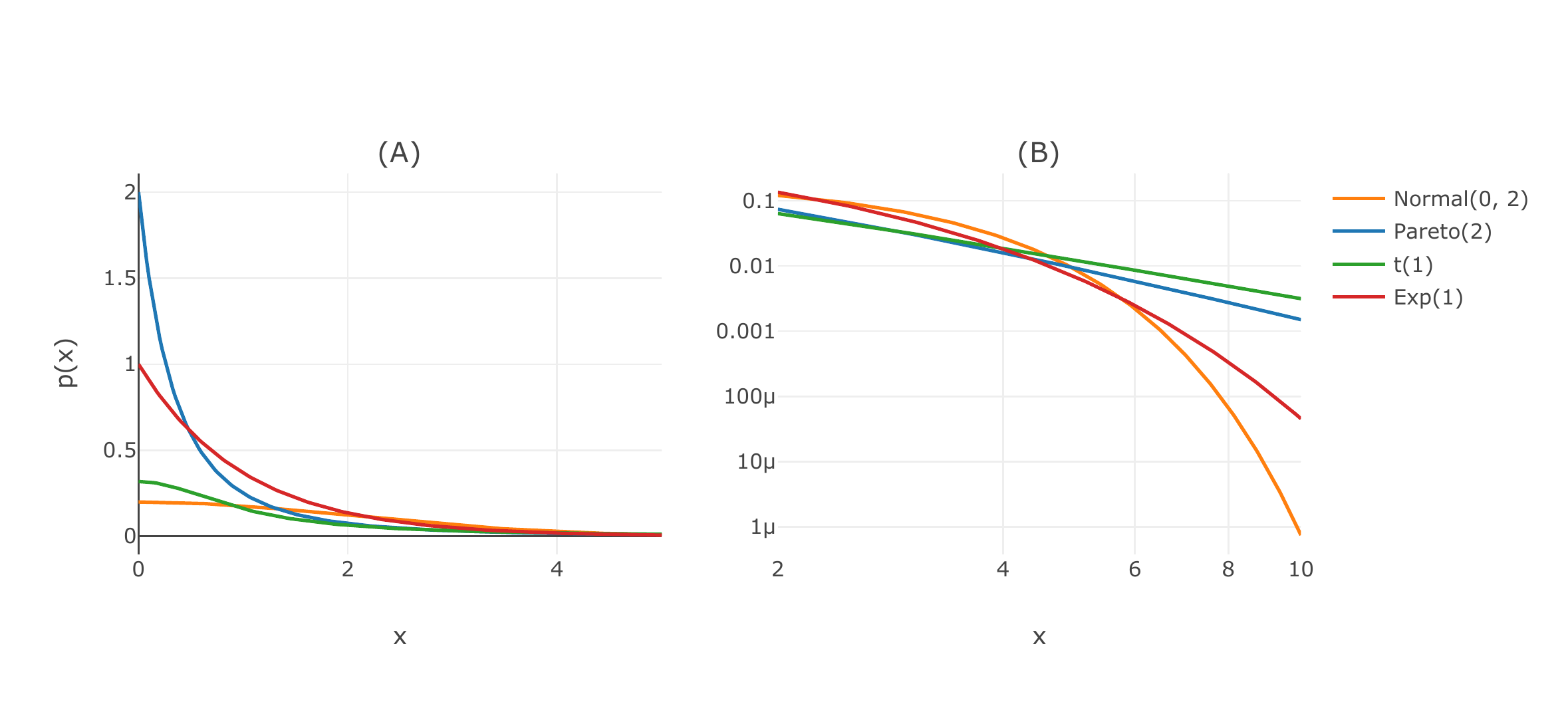}
	\caption{(A) and (B) depict the densities $f(x)$ of a Gaussian, Exponential, t- and Pareto distribution on the linear and log-scale respectively.}
	\label{fig:tail_comparison}
\end{figure}
In various applications \cite{floods, mc_methods_kkk, double_pareto, survival} it is essential to find a generative network that satisfies
\begin{equation}
F_{g_{\theta,i}(Z)}(x)  = F_{X_i}(x)
\label{eq:tail_goal_intuition}
\end{equation} 
for all $i = 1, \dots, d_1$ and $\abs{x} \gg 0$, since the asymptotic behavior determines the propensity to generate extremal values (see \autoref{fig:tail_comparison} for a comparison of the densities on the linear and logarithmic scale).\footnote{Although formally one should write $\gen \circ Z$ we abbreviate our notation as used above.} 

A typical situation that appears in practice is that only a sample of $X$ is available and for $i = 1, \dots, d_1 $ and $\abs{x} \gg 0$ the statistician would like the generative network to fulfill a \textit{tail belief} ${A_{X_i}:\bar\R\to[0,1]}$
\begin{equation}
F_{g_{\theta,i}(Z)}(x)  = A_{X_i}(x),
\label{eq:tail_goal_assumption}
\end{equation}
which may be derived by applying methods from extreme value theory \cite{de2007extreme}.
To the present moment no techniques exist in order to incorporate and model \eqref{eq:tail_goal_intuition} or \eqref{eq:tail_goal_assumption}; thus questioning the universal applicability of generative networks to domains where an extrapolation is necessary. 
Although one might resort to the universal approximation theorem for MLPs \cite{hornik} this is not applicable in practice as only a small proportion of the sample is found to be ``in the tail''. Thus the tail cannot be learned and \textit{has to be extrapolated}; whenever extrapolation is a central demand. 

\section{Main Results}
\label{sec:main_results}
In this paper we address the issue of modeling \textit{exact distributional properties} in the sense of \eqref{eq:tail_goal_intuition} and \eqref{eq:tail_goal_assumption} for the first-time in the context of generative networks. Our main results can be split into two parts. 

\subsection{Tail Asymptotics of Generative Networks}
In the first part we demonstrate that a generative network fulfilling \eqref{eq:tail_goal_intuition} does not necessarily exist. In particular we prove the following statement: 
\begin{theorem}
	\label{thm:tail_rate_bound}
	Let $g_\theta: \R^{d_0}\to\R^{d_1}$ be a parametrized generative network with Lipschitz constant $L(\theta)$ with respect to the $\norm{\cdot}_1$-norm\footnote{We define the $\norm{\cdot}_1$-norm for the vector space $\R^{d}, \ d\in \mathbb{N}$ as  $\norm{x}_1 \coloneqq \sum_{i}^d \abs{x_i}, \ x \in \R^d$.}. Furthermore, for $i = 1,\dots, d_1$ set $w_{\theta, i}(z) = d_0 \ L(\theta) \ z + \abs{g_{\theta, i}(0)}$.
	Then the generated tail asymptotics satisfy for $i = 1, \dots, d_1$
	\begin{equation*}
	\Fbar_{\abs{g_{\theta, i}(\z)}}(x) = \mathcal{O}\normalbrack{\Fbar_{w_{\theta, i}(\abs{Z_1})}(x)}
	\ \textrm{as} \ x \rightarrow \infty.
	\end{equation*}
\end{theorem}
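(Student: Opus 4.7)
The plan is to reduce everything to a single elementary chain: Lipschitz bound, union/pigeonhole bound, and a tail comparison.

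First, I would apply the Lipschitz assumption with the reference point $z' = 0$. Since the $\|\cdot\|_1$-Lipschitz constant controls each coordinate (because $|g_{\theta,i}(z)-g_{\theta,i}(0)| \le \|g_\theta(z)-g_\theta(0)\|_1$), we obtain the deterministic pointwise bound
\begin{equation*}
|g_{\theta,i}(z)| \;\le\; |g_{\theta,i}(0)| + L(\theta)\,\|z\|_1 \;=\; |g_{\theta,i}(0)| + L(\theta)\sum_{j=1}^{d_0}|z_j|,
\end{equation*}
valid for every $z\in\R^{d_0}$ and every $i=1,\dots,d_1$.

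Next I would insert $Z$ and take tail probabilities: for any $x\in\R$,
\begin{equation*}
\bar F_{|g_{\theta,i}(Z)|}(x) \;\le\; \Prob\!\left(L(\theta)\sum_{j=1}^{d_0}|Z_j| \,>\, x - |g_{\theta,i}(0)|\right).
\end{equation*}
Here the key step is a pigeonhole/union bound: if $\sum_{j=1}^{d_0}|Z_j|>t$, then at least one $|Z_j|>t/d_0$, hence, using that the $Z_j$ are i.i.d.,
\begin{equation*}
\Prob\!\left(\sum_{j=1}^{d_0}|Z_j|>t\right) \;\le\; d_0\,\Prob(|Z_1|>t/d_0).
\end{equation*}

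Applying this with $t = (x-|g_{\theta,i}(0)|)/L(\theta)$ and rearranging converts the right-hand side back into the distribution of $w_{\theta,i}(|Z_1|) = d_0 L(\theta)\,|Z_1| + |g_{\theta,i}(0)|$:
\begin{equation*}
\bar F_{|g_{\theta,i}(Z)|}(x) \;\le\; d_0\,\Prob\!\bigl(d_0 L(\theta)\,|Z_1| + |g_{\theta,i}(0)| > x\bigr) \;=\; d_0\,\bar F_{w_{\theta,i}(|Z_1|)}(x).
\end{equation*}
This inequality holds for all $x$ (not merely asymptotically), so it immediately yields the $\mathcal{O}$-statement as $x\to\infty$ with implicit constant $d_0$.

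I do not expect a real obstacle; the only subtlety worth flagging is that the componentwise bound relies on $|g_{\theta,i}(z)|\le \|g_\theta(z)\|_1$, which in turn requires the output norm in the Lipschitz condition to dominate the absolute value of each coordinate — automatic for $\|\cdot\|_1$. The argument otherwise uses nothing beyond Lipschitz continuity, linearity of expectation/union bound, and the i.i.d.\ assumption on the coordinates of $Z$.
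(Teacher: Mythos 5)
Your proposal is correct and follows essentially the same route as the paper: the componentwise Lipschitz bound $|g_{\theta,i}(z)|\le L(\theta)\|z\|_1+|g_{\theta,i}(0)|$ followed by the union/pigeonhole bound $\Prob(\sum_j|Z_j|>t)\le d_0\,\Prob(|Z_1|>t/d_0)$, which is exactly the content of the paper's \autoref{lemma:prob} (stated there via complements of the intersection event). The resulting non-asymptotic inequality with constant $d_0$ matches the paper's conclusion.
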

The proof of \autoref{thm:tail_rate_bound} is provided in \autoref{sec:tail_bounds}. \autoref{thm:tail_rate_bound} gives rise to major implications such as the non-existence of an optimal generative network in various settings. Moreover, the result displays that the conception of choosing the noise prior $Z$ is negligible is false. 

In accordance to the derived tail bound the following $L^p$-space\footnote{An $\R^d$-valued random variable $X$ is an element of the space $L^p(\R^{d}, \norm{\cdot})$ if the expectation of $\norm{X^p}$ with respect to some norm $\norm{\cdot}$ on $\R^d$ is finite.} related property will be proven:
\begin{prop}
	\label{prop:lp_prop}
	Let $p \in \N$ and $g_\theta:\R^{d_0} \to \R^{d_1}$ be a parametrized generative network. If $Z$ is an element of $L^p(\R^{d_0}, \norm{\cdot})$, then $g_\theta(Z) \in L^p(\R^{d_1}, \norm{\cdot})$.
\end{prop}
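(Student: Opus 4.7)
The plan is to exploit the Lipschitz continuity that is the standing hypothesis on $g_\theta$ in \autoref{thm:tail_rate_bound}, combined with the equivalence of norms in finite dimensions. Specifically, for every $z \in \R^{d_0}$ the Lipschitz property yields
\begin{equation*}
\norm{g_\theta(z)}_1 \;\leq\; \norm{g_\theta(0)}_1 + L(\theta)\,\norm{z}_1,
\end{equation*}
which, combined with the elementary inequality $(a+b)^p \leq 2^{p-1}(a^p + b^p)$ valid for $a,b\geq 0$ and $p \in \N$, gives the pointwise bound
\begin{equation*}
\norm{g_\theta(z)}_1^p \;\leq\; 2^{p-1}\bigl(\norm{g_\theta(0)}_1^p + L(\theta)^p\,\norm{z}_1^p\bigr).
\end{equation*}

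Next, I would insert $z=Z$ and take expectations. By the assumption $Z \in L^p(\R^{d_0}, \norm{\cdot})$ together with the fact that all norms on a finite-dimensional vector space are equivalent, there exists a constant $c>0$ with $\norm{z}_1 \leq c\,\norm{z}$, so that $\E[\norm{Z}_1^p] \leq c^p\,\E[\norm{Z}^p] < \infty$. This yields
\begin{equation*}
\E\bigl[\norm{g_\theta(Z)}_1^p\bigr] \;\leq\; 2^{p-1}\bigl(\norm{g_\theta(0)}_1^p + L(\theta)^p\,c^p\,\E[\norm{Z}^p]\bigr) \;<\; \infty.
\end{equation*}
Finally, invoking norm equivalence once more on $\R^{d_1}$, I conclude that $\E[\norm{g_\theta(Z)}^p] < \infty$, i.e.\ $g_\theta(Z) \in L^p(\R^{d_1}, \norm{\cdot})$.

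An alternative would be to derive the statement directly from \autoref{thm:tail_rate_bound} via the layer-cake identity $\E[|X|^p] = p\int_0^\infty x^{p-1}\Fbar_{|X|}(x)\,dx$, using that $w_{\theta, i}(|Z_1|)$ is an affine transformation of $|Z_1|$ whose $p$-th moment is controlled by $\E[\norm{Z}^p]$; but this is essentially the Lipschitz argument in disguise and requires care with the implicit constant hidden in the $\mathcal{O}$-notation near the origin. The only real obstacle is bookkeeping around the norm on which the Lipschitz constant is defined versus the norm in the hypothesis of the proposition, which is resolved cleanly by the finite-dimensional norm equivalence; no further subtlety should arise.
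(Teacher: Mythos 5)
Your proof is correct and follows essentially the same route as the paper: the Lipschitz bound $\norm{g_\theta(z)} \leq L(\theta)\norm{z} + \norm{g_\theta(0)}$ followed by taking $p$-th powers and expectations, the paper simply expanding via the binomial theorem where you use $(a+b)^p \leq 2^{p-1}(a^p+b^p)$. Your additional bookkeeping with finite-dimensional norm equivalence is harmless (the paper states the Lipschitz bound directly in the norm $\norm{\cdot}$ of the hypothesis), and both variants close the argument.
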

The statement particularly demonstrates that a $p$-th unbounded moment cannot be generated when inferring a noise prior where the $p$-th moment is bounded. Finally, we conclude the section on tail bounds by observing that an exact modeling of the tails is not favored by utilizing generative networks.

\subsection{Copula and Marginal Flows}
\begin{figure}[t]
	\centering
\begin{subfigure}{.33\textwidth}
		\centering
		\includegraphics[width=\textwidth]{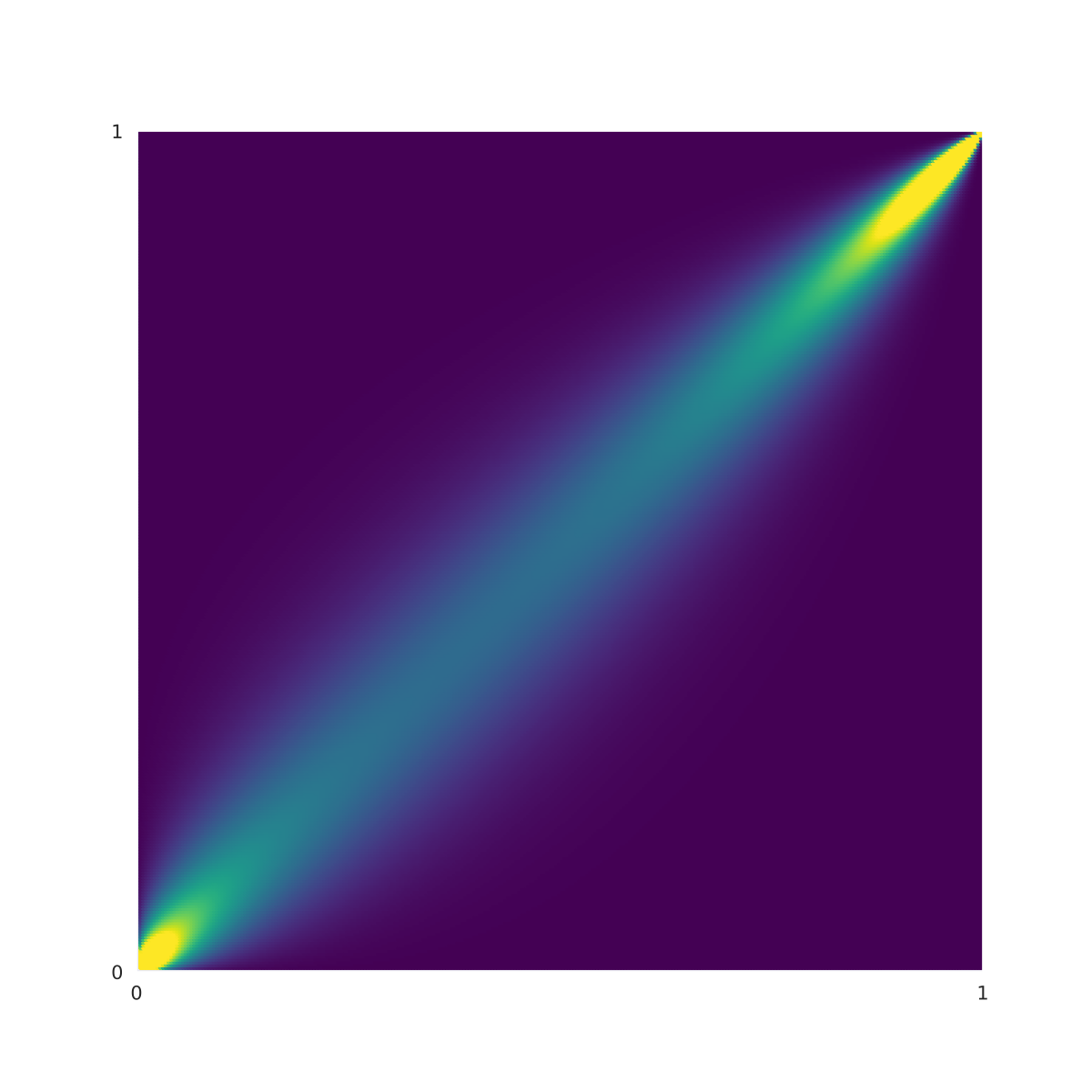}
		\caption{Gumbel Copula Density}
		\label{fig:gumbel_density}
\end{subfigure}%
\begin{subfigure}{.33\textwidth}
	\centering
	\includegraphics[width=\textwidth]{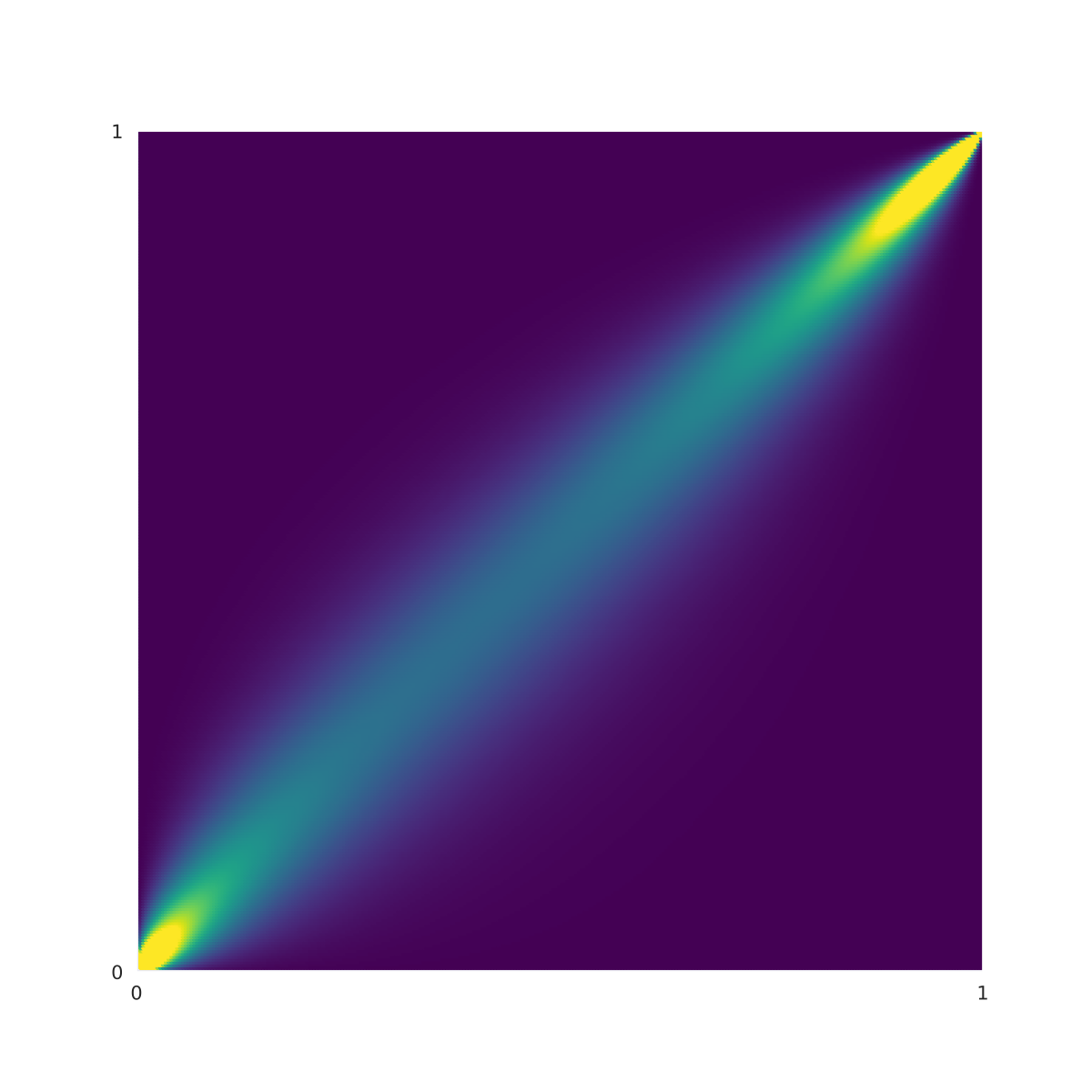}
	\caption{Copula Flow Approximation}
	\label{fig:copula_flow_gumbel}
\end{subfigure}%
\begin{subfigure}{.33\textwidth}
	\centering
	\includegraphics[width=\textwidth]{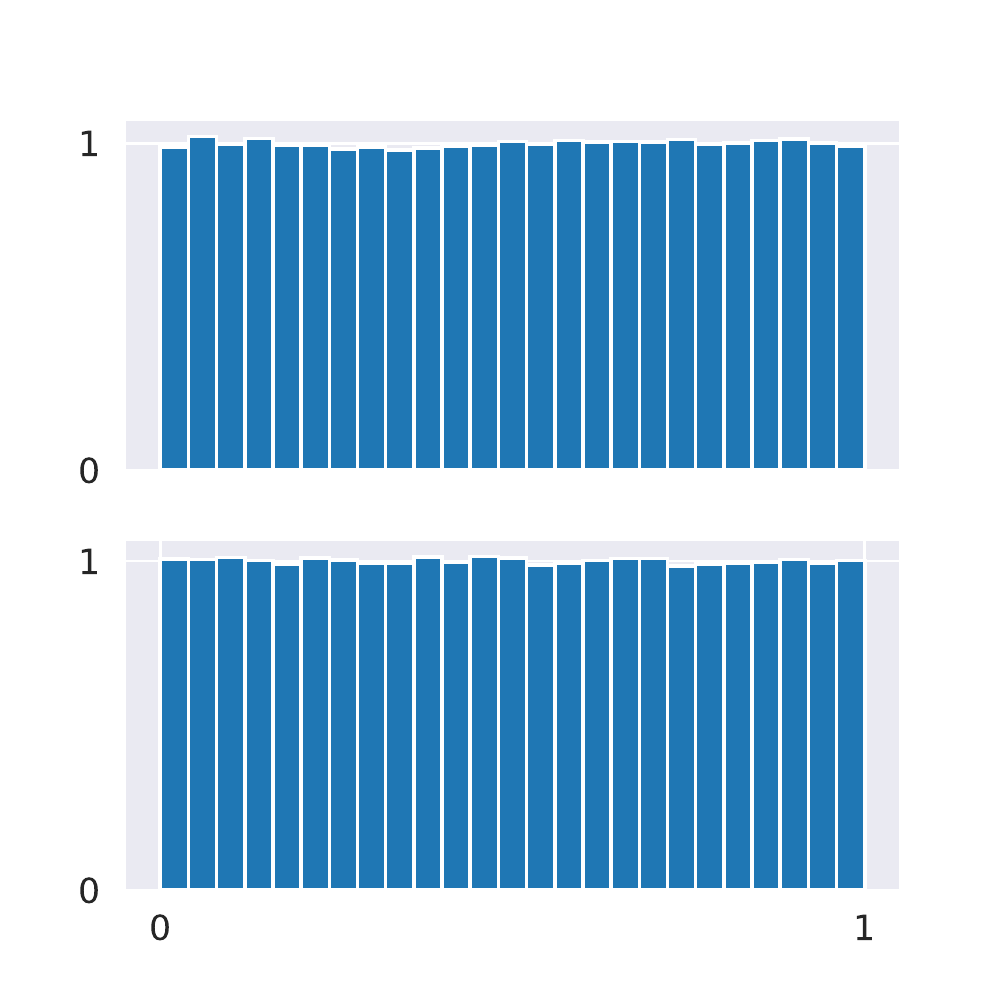}
	\caption{Copula Flow Marginals}
	\label{fig:gumbel_hist}
\end{subfigure}%
\caption{(a) illustrates the theoretical density of the gumbel copula, (b) the density obtained by applying a copula flow onto the Gumbel copula and (c) the histograms of each marginal obtained by sampling from the copula flow (5E+5 samples).}
\label{fig:gumbel}
\end{figure}

In the second part we introduce and propose the use of \textit{copula and marginal generative flows} (\textit{CM flows}) in order to model \textit{exact tail beliefs} whilst having a tractable log-likelihood. CM flows are a new model developed in this paper and are inspired by representing the joint distribution of $(X_1, X_2)$ as a copula plus its marginals, also known as a \textit{pair-copula construction} (\textit{PCC}) (cf. \cite{czado_pair_copula, nips_introduction_to_vine_copulas}):
\begin{align*}
p(x_1, x_2) &= p(x_2 \cond{} x_1)\  p(x_1)\\
&= c\normalbrack{F_{X_1}(x_1), F_{X_2}(x_2)} \ p(x_2) \ p(x_1),
\end{align*}
where $c:[0,1]^2 \to \R_{\geq 0}$ is the density of a copula. 

\begin{wrapfigure}{r}{0.5\textwidth}
	\[
	\begin{tikzcd}[column sep=huge,row sep=large]
	X  \arrow[rr,bend left,"g_{\theta, \eta}^{-1}"] \arrow[r,shift left=.75ex,"m_\theta^{-1}"] & 
	\hat C \arrow[r,shift left=.75ex,"h_\eta^{-1}"]& 
	\hat U
	\end{tikzcd}
	\]
	\[
	\begin{tikzcd}[column sep=huge,row sep=large]
	\tilde X  & 
	\tilde C \arrow[l,shift left=.75ex,"m_\theta"] & 
	\arrow[ll,bend left,"g_{\theta, \eta}"]
	\arrow[l,shift left=.75ex,"h_\eta"] 
	U
	\end{tikzcd}
	\]
	\caption{A commutative diagram of the proposed generative flow.}
	\label{fig:commutative_diagram}
\end{wrapfigure}
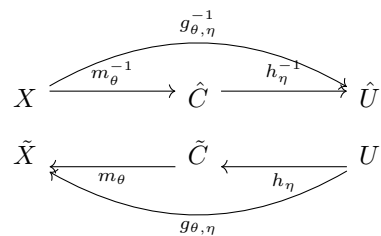
Following this decomposition CM flows are explicitly constructed by composing a copula flow $h_\eta: [0, 1]^2\to [0,1]^2 $ with a marginal flow $ m_\theta: [0,1]^2 \to \R^2$. The marginal flow approximates the inverse CDFs $F_{X_1}^{-1},\ F_{X_2}^{-1}$, whereas the copula flow approximates the generating function of $C \coloneqq (F_{X_1}(X_1), F_{X_2}(X_2))$. Thus, a CM flow is given by $g_{\theta, \eta}(u) = m_\theta \circ h_\eta(u) $ for $ u \in [0, 1]^2$ and the used transformations are depicted in \autoref{fig:commutative_diagram}. Although we restrict ourselves to introducing the bivariate CM flow, the proposed flow can be generalized by using Vine copulas \cite{bedford_vines, nips_introduction_to_vine_copulas}. 

The numerical results in \autoref{sec:numerical_results} highlight that bivariate copulas can be closely approximated by employing copula flows (see also \autoref{fig:gumbel} which depicts the results obtained by applying a copula flow onto the Gumbel copula.)

\subsection{Structure}
In \autoref{sec:tail_bounds} we derive upper bounds for the tails generated by a generative network. Afterward, we introduce in \autoref{sec:copula_and_marginal_flows} CM flows in order to model exact tails. Numerical results that support the application of copula flows will be presented in \autoref{sec:numerical_results}. Section \ref{sec:related_work} provides a literature overview and \autoref{sec:conclusion} concludes this paper.

\section{Related Work}
\label{sec:related_work}
PCCs were used in past works such as Copula Bayesian Networks (CBNs) \cite{cbns}. CBNs marry Bayesian networks with a copula-based re-parametrization that allow for a high-dimensional representation of the multivariate targeted density. However, since CBNs are defined through a space of copula and marginal densities they can only express joint-distributions defined in within their parametric space. Despite the amount of research that was committed in defining new families of copulas that are unsymmetrical and parameterizable it is still an active area of research. With CM flows we try to learn and represent the copula by optimizing a copula flow instead of dedicating a (restricted) parametric class as in CBNs. Therefore, our approach has more flexibility, however, comes at the cost of needing to approximate uniform marginals arbitrarily well. 

CM flows build up on the success of bijective neural networks by utilizing and modifying them. Therefore, our work relates in general to generative flows \cite{nice, real_nvp, NAF}. However, in this paper we discuss for the first time an exact modeling of distributional properties such as the tail of the targeted random variable. 

\section{Upper Tail Bounds for Generative Networks}
\label{sec:tail_bounds}
In this section we derive an upper bound for the tail induced as well as a $L^p$-space related property when feeding in the noise prior $Z$ into a generative network. Prior to proving these results we introduce our setup.

\subsection{Setup}
\label{sec:setup}
We begin by defining basic but important concepts which we believe to be a subset of the \textit{general assumptions} in deep learning literature. Roughly speaking, neural networks are constructed by composing affine transformations with activation functions (cf. \autoref{def:activation_function} in \autoref{appendix_definitions}). The main property that these networks have in general is that they are Lipschitz continuous. This is done for a good reason: gradients become bounded. We therefore define a network the following way: 
\begin{definition}[Network]
	\label{def:network}
	Let $d_0, d_1 \in \mathbb{N}$ and $\Theta$ be a real vector space. A function $f: \R^{d_0} \times \Theta \to \R^{d_1}$ that is Lipschitz continuous is called a \textit{network}. $\Theta$ is called the \textit{parameter space}. The space of networks mapping from $\R^{d_0}$ to $\R^{d_1}$ will be denoted by $\operatorname{DNN}\normalbrack{\R^{d_0}, \R^{d_1}}$.
\end{definition}
In the context of generative modeling we call a network a \textit{generative network} when it is defined as mapping from the \textit{latent space} $\R^{d_0}$ to the \textit{data/target space} $\R^{d_1}$ for $d_0, d_1 \in \mathbb{N},\ d_0 \leq d_1$. Furthermore, a $\Z$-valued random variable with i.i.d. components is called \textit{noise prior} and will be denoted by $Z$ throughout this section. The goal of generative modeling in the context of deep learning is to optimize the parameters $\theta \in \Theta$ of a generative network $g: \Z \times \Theta \to \X $ such that $g_{{\theta}}( Z)$, is equal in distribution to \xrv. This motivates our next definition.
\begin{definition}[Set of Optimal Generative Networks]
	Let $Z$ be a noise prior and $X$ an $\R^{d_1}$-valued random variable. We denote by 
	\begin{equation}
	\label{eq:domain}
	\optgen(\z,\x)\coloneqq \left\lbrace 
	g\in\operatorname{DNN}\normalbrack{\Z, \X}: \ \exists \theta \in \Theta^{(g)}: 
	\ \gen(Z) \stackrel{d}{=} X
	\right\rbrace,
	\end{equation}
	where $\stackrel{d}{=}$ represents equality in distribution, the \textit{set of optimal generators}.
\end{definition}

Generative networks by definition can represent any affine transformation. In \autoref{sec:upper_tail_bound} it will be useful to define the concept of \textit{affine lighter-tailedness} in order to compare the tails that can be generated by a network with other tails. 

\begin{definition}[Affinely Lighter-Tailed]
	Let $V$ and $W$ be two $\R$-valued random variables. 
	We call $V$ \textit{affinely lighter tailed than} $W$ iff for any affine function $r: \R \to \R$
	\[
	\Fbar_{\abs{r(V)}}(x) = o\normalbrack{\Fbar_{\abs{W}}(x)} \ \textrm{as} \ x\rightarrow\infty.
	\]
\end{definition}

\subsection{Results and Derivations}
\label{sec:upper_tail_bound}
In what follows we demonstrate that the tail generated by a network when inducing a noise prior $Z$ has order $\mathcal{O}(\Fbar_{w_\theta(\abs{Z_1})})$ where $w_\theta:\R \to \R$ is an affine transformation that depends on $\theta$. 
Prior to proving our main result \autoref{thm:tail_rate_bound} we show in \autoref{lemma:prob} that the tail $\Fbar_{Y}(x)$, where $Y = a\ \sum_{i=1}^{d_0} Z_i+b$ similarly has order $\mathcal{O}(\Fbar_{d_0 a Z_1}(x))$ as $x \rightarrow \infty$. A proof of the statement can be found in \autoref{appendix}.
\begin{lemma}
	\label{lemma:prob}
	Let $a, b \in \R$. Then for all $x \in \R$ we obtain
	\[
	\Prob\normalbrack{a \  \sum_{j=1}^{d_0} Z_j + {b} > x} \leq d_0 \ \Prob\normalbrack{d_0  \ a \ Z_1 +b > x}.
	\]
\end{lemma}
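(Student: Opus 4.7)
My plan is to reduce the inequality to a one-line union-bound argument by rewriting the weighted sum as an arithmetic mean. The key algebraic identity I will use is
\[
a \sum_{j=1}^{d_0} Z_j + b \;=\; \frac{1}{d_0}\sum_{j=1}^{d_0} \bigl( d_0\, a\, Z_j + b \bigr),
\]
so that the random variable on the left is exactly the mean of the $d_0$ random variables $Y_j \coloneqq d_0\, a\, Z_j + b$. This observation is purely algebraic and, crucially, requires no assumption on the sign of $a$, which lets me avoid a case distinction.

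Next, I would exploit the elementary fact that the mean of finitely many real numbers cannot strictly exceed their maximum: if $\tfrac{1}{d_0}\sum_j Y_j > x$, then at least one $Y_j$ must itself exceed $x$. Translated into events, this gives the inclusion
\[
\Bigl\{\, a \sum_{j=1}^{d_0} Z_j + b > x \,\Bigr\} \;\subseteq\; \bigcup_{j=1}^{d_0} \{\, Y_j > x \,\}.
\]
A standard union bound then yields $\Prob\normalbrack{a \sum_{j=1}^{d_0} Z_j + b > x} \leq \sum_{j=1}^{d_0} \Prob(Y_j > x)$, and since the noise prior has i.i.d.\ components, each $Y_j$ is equal in distribution to $Y_1 = d_0\, a\, Z_1 + b$. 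Hence the sum collapses to $d_0\,\Prob(d_0\, a\, Z_1 + b > x)$, which is exactly the claimed bound.

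I do not anticipate a serious obstacle: the proof is essentially a one-liner combining the mean-versus-maximum inequality with a union bound and the i.i.d.\ assumption. The only bookkeeping items worth a sanity check are the degenerate edge cases $a = 0$ (the inequality becomes $\Prob(b > x) \leq d_0 \,\Prob(b > x)$, which is trivial) and negative $a$ (handled uniformly by the algebraic identity above, since no monotonicity of scalar multiplication is invoked).
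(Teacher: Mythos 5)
Your proposal is correct and is essentially the paper's own argument: the paper establishes the same event inclusion (phrased via complements and De Morgan, i.e.\ $\bigcap_j \{a Z_j \le (x-b)/d_0\} \subseteq \{a\sum_j Z_j \le x-b\}$), then applies the union bound and the i.i.d.\ assumption exactly as you do. Your rewriting of the sum as the mean of the $Y_j = d_0\, a\, Z_j + b$ is just a cleaner packaging of the same mean-versus-maximum step, and your sanity checks on $a=0$ and $a<0$ are sound.
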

Next, we prove \autoref{thm:tail_rate_bound} by applying \autoref{lemma:prob} and utilizing Lipschitz continuity of networks. 
\begin{proof}[Proof of \autoref{thm:tail_rate_bound}]
	First observe that due to the Lipschitz continuity of a generative network the following property holds for all $z \in \Z$ and $i=1,\dots, d_1$:
	\begin{equation}
	\label{eq:lipschitz}
	\abs{g_{\theta, i}(z) - g_{\theta, i}(0)} \leq L(\theta)\norm{z} \ \Rightarrow \ \abs{g_{\theta, i}(z)} \leq L(\theta) \norm{z} + \abs{g_{\theta, i}(0)}.
	\end{equation}
	By applying \eqref{eq:lipschitz} and resorting to \autoref{lemma:prob} we obtain for all $x \in \R$
	\begin{align*}
		\Prob(\abs{\gen(\z)} > x) & \leq \Prob\normalbrack{ L(\theta) \ \sum_{i=1}^{d_0}\abs{Z_i} + \abs{g_\theta(0)} > x} \\
		& \leq 
		d_0 \cdot \Prob\normalbrack{w_\theta(\abs{\z_1}) > x}
	\end{align*}
	where $Z_1$ is the first component of the random variable $Z$. From this bound the order is a direct consequence and we can conclude the statement.  
\end{proof}

\autoref{thm:tail_rate_bound} has some immediate consequences. First, it shows that the tails of the distribution induced by the generator decay at least at the rate of an affine transformation of $\abs{Z_1}$. Therefore, if $X_i$ is not affinely lighter-tailed than $Z_1$ for some $i = 1,\dots, d_1$ the set of optimal generative networks is empty:
\begin{corollary}
	\label{cor:lighter_tail_prior}
	Assume that for some $i = 1,\dots, d_1$ the random variable $X_i $ is not affinely lighter-tailed than $Z_1$. Then $\optgen(Z,X)=\emptyset$.
\end{corollary}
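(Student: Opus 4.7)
The plan is to argue by contraposition: I would show that if $\optgen(Z, X)$ is nonempty, then for every $i$ the marginal $X_i$ is affinely lighter-tailed than $Z_1$. To this end, suppose there exist $g \in \operatorname{DNN}\normalbrack{\Z, \X}$ and $\theta \in \Theta^{(g)}$ with $\gen(Z) \stackrel{d}{=} X$. Restricting to coordinate $i$ immediately gives $\geni(Z) \stackrel{d}{=} X_i$, so the tail of $\abs{X_i}$ equals the tail of $\abs{\geni(Z)}$.

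The key trick would be to apply \autoref{thm:tail_rate_bound} not to $\geni$ directly but to the post-composed one-dimensional network $r \circ \geni : \R^{d_0} \to \R$, for an arbitrary affine $r(v) = \alpha v + \beta$. The composition $r \circ \geni$ is itself a Lipschitz network with Lipschitz constant at most $\abs{\alpha} L(\theta)$ and with $\abs{(r \circ \geni)(0)} \leq \abs{\alpha}\abs{\geni(0)} + \abs{\beta}$, and $(r \circ \geni)(Z) \stackrel{d}{=} r(X_i)$. Plugging these bounds into \autoref{thm:tail_rate_bound} would yield
\[ \Fbar_{\abs{r(X_i)}}(x) \leq K\, \Fbar_{\abs{Z_1}}\normalbrack{\frac{x - C}{D}} \quad \text{as } x \to \infty, \]
where $D = d_0\abs{\alpha} L(\theta)$, $C = \abs{\alpha}\abs{\geni(0)} + \abs{\beta}$, and $K > 0$ is a constant arising from the $\bigO$ in the theorem.

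The final step is to conclude that this right-hand side is $o(\Fbar_{\abs{Z_1}}(x))$, which would yield that $X_i$ is affinely lighter-tailed than $Z_1$ and thus contradict the hypothesis. I expect this to be the main obstacle: \autoref{thm:tail_rate_bound} only supplies an $\bigO$-bound, whereas the definition of affine lighter-tailedness requires a strict $o$-statement, and the ratio $\Fbar_{\abs{Z_1}}((x-C)/D)/\Fbar_{\abs{Z_1}}(x)$ may fail to vanish for heavy-tailed $Z_1$ or large $\abs{\alpha}$. Closing this gap likely relies on strict decay and monotonicity of $\Fbar_{\abs{Z_1}}$, or on sharpening the effective Lipschitz constant in the tail direction beyond the crude bound $L(\theta)$. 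Once this asymptotic comparison is handled for every affine $r$, the contrapositive is complete and the corollary follows.
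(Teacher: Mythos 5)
Your strategy---contraposition, applying \autoref{thm:tail_rate_bound} to the post-composed network $r\circ\geni$, and then comparing $\Fbar_{\abs{Z_1}}\normalbrack{(x-C)/D}$ against $\Fbar_{\abs{Z_1}}(x)$---is in substance the same comparison the paper performs (the paper phrases it in the direct rather than contrapositive direction, producing for each $\theta$ an affine $a_\theta$ with $\Fbar_{\abs{w_{\theta,i}(Z_1)}} = o\normalbrack{\Fbar_{\abs{a_\theta(X_i)}}}$ and then invoking \autoref{thm:tail_rate_bound}). Everything up to your displayed inequality is correct: $r\circ\geni$ is a network with Lipschitz constant at most $\abs{\alpha}L(\theta)$ and offset $\abs{\alpha}\abs{\geni(0)}+\abs{\beta}$, and equality in distribution transfers the bound to $\Fbar_{\abs{r(X_i)}}$, with $K=d_0$ read off from the proof of the theorem.

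The obstacle you flag at the end is, however, a genuine gap and not a removable technicality, so the proposal does not constitute a proof. You need $\Fbar_{\abs{Z_1}}\normalbrack{(x-C)/D} = o\normalbrack{\Fbar_{\abs{Z_1}}(x)}$ for \emph{every} affine $r$, hence for arbitrarily large $D = d_0\abs{\alpha}L(\theta)$; but once $D\geq 1$ we have $(x-C)/D\leq x$ eventually, so the left-hand side dominates the right-hand side and the ratio cannot tend to zero for any $Z_1$ with unbounded support. Worse, the contrapositive target as you state it is unattainable under the literal (universal) reading of the definition of affine lighter-tailedness: take $d_0=d_1=1$, $X=Z$ Gaussian and $g_\theta=\mathrm{id}$; this is an optimal generator, yet $X_1$ is not affinely lighter-tailed than $Z_1$ since $r=\mathrm{id}$ gives ratio $1$. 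So the step can only be closed by reinterpreting the quantifier in the definition (reading ``for any affine $r$'' existentially) and imposing a strict decay condition on $\Fbar_{\abs{Z_1}}$ such as $\Fbar_{\abs{Z_1}}(cx)=o\normalbrack{\Fbar_{\abs{Z_1}}(x)}$ for $c>1$, which fails, e.g., for regularly varying tails. You should not feel you missed a trick from the paper: the paper's own proof bridges exactly this point with an unproved ``it can be shown that'' and the non sequitur that ``not affinely lighter-tailed'' yields the reverse little-$o$ relation; your writeup is more honest about where the argument is incomplete, but incomplete it remains.
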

The following two examples illustrate the effects of \autoref{cor:lighter_tail_prior} in two situations that are relevant both from a practical and theoretical perspective. 
\begin{example}
	Assume $V\sim\mathcal{N}(0, 1)$ is standard normally distributed and $Y$ an $\R$-valued Laplace distributed random variable. Then by \autoref{cor:lighter_tail_prior} the set of optimal generative networks $\optgen(V,Y)$ is empty.
\end{example}
\begin{example}
	Assume $U \sim \mathcal{U}([0,1])$ is uniformly distributed and $Y$ a random variable with support $\R$. Then $g_\theta(U)$ is bounded and again by \autoref{cor:lighter_tail_prior} we obtain $\optgen(U,Y) = \emptyset$.
\end{example}

Since the tail determines the probability mass allocated to extremal values it relates to the integrability of a random variable. We therefore arrive at \autoref{prop:lp_prop} which can be viewed as an $L^p$-space related characterization of the distribution induced by $g_\theta(Z)$. The result can be seen as another consequence of \autoref{thm:tail_rate_bound}, but will be proven for simplicity by applying the binomial theorem. 
\begin{proof}[Proof of \autoref{prop:lp_prop}]
	As in \eqref{eq:lipschitz} we obtain for a parametrized generative network $g_\theta: \R^{d_0} \to \R^{d_1}$, norm $\norm{\cdot}$ and for all $z \in \R^{d_0}$
	\begin{equation}
	\label{eq:lip_prop}
	\norm{g_\theta(z) - g_\theta(0)} \leq L(\theta) \norm{z} \Rightarrow \norm{g_\theta(z)} \leq L(\theta) \norm{z} + \norm{g_\theta(0)}
	\end{equation}
	due to $\norm{x}-\norm{y}\leq \norm{x-y}$ for $x,y\in\R^n$. Employing \eqref{eq:lip_prop} and applying the binomial theorem we can prove that $g_\theta(Z)$ is an element of the space $L^p(\R^{d_1}, \norm{\cdot})$
	\begin{align*}
	\E \sqbrack{\norm{g_\theta(Z)}^p} &\leq \E \sqbrack{\normalbrack{L(\theta) \norm{Z} + \norm{g_\theta( 0)}}^p}\\
	&= \sum_{k=0}^{p} {p \choose k} \  \E\sqbrack{L(\theta)^k \norm{Z}^{k}}  \ \norm{g_\theta(0)}^{p-k}\\
	&<\infty,
	\end{align*}
	where we used that $Z$ is an element of the space $L^p(\Z, \norm{\cdot})$. This proves the statement. 
\end{proof}

\subsection{The Inability of Estimating and Adjusting the Tailedness}
In order to estimate and consequently adjust the tail by exchanging the noise prior $Z$ we would need besides the Lipschitz constant $L(\theta)$ for all $i = 1, \dots, d_1$ a ``lower'' Lipschitz constant $K_i(\theta)$ which is defined for $z \in \Z$  as
\begin{equation}
\label{eq:up_low_lip_bounds}
K_i(\theta) \norm{z}_1 \leq \geni(z) - \geni(0) \leq L(\theta) \norm{z}_1.
\end{equation}
With \eqref{eq:up_low_lip_bounds} and \autoref{thm:tail_rate_bound} a lower and upper bound of $\Fbar_{\geni(Z)}$ could be obtained. However, since in general $K_i(\theta)$ is not available we arrive at the result that the induced tail remains unknown and thus, the statistician unpleased.\footnote{We note that in simplified network constructions such as in a ReLU network $f:\Z \to \R$ the exact tail can be obtained for one-dimensional targeted random variables $X$ by using \cite[Lemma 1]{sizenoise} which shows that the domain of $f$ can be divided into a finite number of convex pieces on which $f$ is affine.}

\section{Copula and Marginal Flows: Model Definition}
\label{sec:copula_and_marginal_flows}
The previous section on tail bounds demonstrates that generative networks do not favor controlling the generated tail behavior. We now show how a tail belief can be incorporated by using a bivariate CM flow $g: [0,1]^2 \times \Theta \times H \to \R^2$ that is defined for $u \in [0,1]^2$ as the composition of a parametrized marginal and a copula flow
\[
g_{\theta, \eta}(u) = m_\theta \circ h_\eta(u).
\]
The bivariate marginal flow is represented by two DDSFs (cf. \cite{NAF} or \autoref{def:ddsf}), whereas the bivariate copula flow by a 2-dimensional Real NVP \cite{real_nvp}. Although only the bivariate case is introduced we remark that CM flows can be generalized to higher dimensions by following a Vine copula and leave this as future work. Throughout this section we assume that $X=(X_1, X_2)$ is $\R^2$-valued and $F_{X_1}, F_{X_2}$ are invertible. 

\subsection{Marginal Flows: Exact Modeling of the Tail}
In what follows we construct univariate marginal flows for the $\R$-valued random variable $X_1$ and then define the vector-valued extension for $X$. Beforehand let us specify the concept of tail beliefs. 
\begin{definition}[Tail Belief]
Let $\alpha,\beta \in \bar\R, \alpha < \beta$ and set $B_X \coloneqq (-\infty, \alpha] \cup [\beta, \infty)$. Furthermore, let $X$ be an $\R$-valued random variable and $A_{X}: \bar\R\to[0,1]$ a known CDF. We call the tuple $(A_X, B_X)$ \textit{tail belief} when we assume that 
\[
\forall x \in B_X: \quad F_{X}(x) = A_{X}(x).
\]
\end{definition}
Thus, when incorporating a tail belief into a generative network we are interested in finding a mapping $m_{\theta}: [0,1] \to \R$ for $U_1\sim \mathcal{U}([0,1])$ that satisfies 
\begin{equation}
\forall x \in B_{X_1}: \quad \Prob(m_{\theta}(U_1) \leq x)  = A_{X_1}(x) \label{eq:tail_goal}.
\end{equation}
In order to satisfy \eqref{eq:tail_goal} we propose the following construction.
\begin{definition}[Univariate Marginal Flow]
	\label{def:univariate_marginal_flow}
	Let $a = A_{X_1}(\alpha)$ and $b = A_{X_1}(\beta)$. Furthermore, let $f:\R \times \Theta \to \R$ be a DDSF (cf. \autoref{def:ddsf}) and $\tilde f: [a,b] \times \Theta \to [\alpha, \beta]$ a scaled version of $f_{}$ which is defined as
	\begin{align*}
	\tilde f_{}(u,\theta) = (\beta-\alpha) \cdot \dfrac{f_{\theta}(u)-f_{\theta}(a)}{f_{\theta}(b)-f_{\theta}(a)} + \alpha.
	\end{align*}
	We call a function $m: [0,1] \times \Theta \to \R$ defined as
	\begin{equation}
	m(u, \theta) =
	\begin{cases}
	A_{X_1}^{-1}(u) \quad &u \in [0, a] \cup [b, 1]\\
	\tilde f(u,\theta) \quad &u\in (a,b)
	\end{cases}
	\end{equation}
	a \textit{univariate marginal flow}.
\end{definition}
\paragraph{Properties}
By construction a parametrized univariate marginal flow $m_\theta$ defines a bijection and satisfies the tail objective \eqref{eq:tail_goal}. Furthermore, the construction can be generalized in order to incorporate any prior knowledge of $F_{X_1}$ on a union of compact intervals. While $m_\theta$ defines an optimal map on $[0, a] \cup [b, 1]$, the flow approximates the inverse CDF $F_{X_1}^{-1}$ on $(a,b)$ and therefore only needs to be trained on the interval $(a,b)$. 

Due to the invertibility of a univariate marginal flow $m_\theta$ the density of a sample $x \in (\alpha, \beta), \ m_\theta(u) = x$ can be evaluated by resorting to the \textit{change of variable formula} \cite{bauer2002wahrscheinlichkeitstheorie, real_nvp}. 
Thus the parameter $\theta$ can be optimized by minimizing the negative log-likelihood (NLL) of $p(x)$ for $x \in (\alpha, \beta)$ while discarding any samples $x \not\in (\alpha, \beta) $.
\paragraph{Bivariate Marginal Flows}
Generalizing univariate marginal flows to bivariate (or multivariate) marginal flows is simple. For this we assume that for $i = 1, 2$ we have a tail belief $(A_{X_i}, B_{X_i})$. Then for each $i=1, 2$ we can define an univariate marginal flow $m^{(i)}: [0,1]\times \Theta^{(i)} \to [0,1] $ and construct the multivariate {marginal flow} which is defined for $u \in [0,1]^d$ as 
\[
m_\theta(u) = \sqbrack{m_{\theta_1}^{(1)}(u_1), m_{\theta_2}^{(2)}(u_2)}^T.
\]

\subsection{Copula Flows: Modeling the Joint Distribution}
Bivariate marginal flows were constructed to approximate the inverse CDFs $F_{X_1}^{-1}, F_{X_2}^{-1}$. We now define bivariate copula flows in order to approximate the generating function of 
\[
(C_1, C_2) \coloneqq (F_{X_1}(X_1), F_{X_2}(X_2)),
\]
whilst having a tractable log-likelihood. 
\begin{definition}[Bivariate Copula Flow]
	Let $\tilde h: \R^{2} \times H \to \R^{2}$ be a Real NVP (cf. \cite{real_nvp}) and $\Psi: \R \to [0,1]$ an invertible CDF. A function defined as
	\begin{align*}
	h: [0,1]^{2} \times H &\to [0,1]^{2}\\
	(u, \eta) &\mapsto \Psi \circ \tilde h_\eta \circ \Psi^{-1}(u)
	\end{align*}
	where $\Psi^{-1}$ and $\Psi$ are applied component-wise, is called \textit{copula flow}.
\end{definition}
\paragraph{Properties} Bivariate copula flows are bijective, since they are compositions of bijective functions. Furthermore, by applying the CDF $\Psi$ after the generative flow, the output becomes $[0,1]^{2}$-valued. In our implementation we use $\Psi = \sigma$, where $\sigma$ is the sigmoid activation. The copula flow's objective is to optimize the parameters $\eta \in H$ such that for $U \sim \mathcal{U}([0,1]^2)$ the random variable $(\tilde C_1, \tilde C_2) = (h_{\eta, 1}(U), h_{\eta, 2}(U))$ closely approximates $(C_1, C_2)$. 

As a special case bivariate copula flows can be parametrized such that only one variable is transformed whereas the other stays identical
\begin{equation}
\label{eq:constrained_cf}
(\tilde C_1,  \tilde C_2)  = (h_{\eta, 1}(U),  U_2),
\end{equation}
ensuring that the marginal distribution of the second component is uniform and thus the modeling of exact tails. We refer to \eqref{eq:constrained_cf} as a \textit{constrained bivariate copula flow}. 

\section{Numerical Results}
\label{sec:numerical_results}
Due to the positive results of DDSFs \cite{NAF} and thus the effectiveness of marginal flows we restrict ourselves to the evaluation of copula flows. Specifically, we evaluate the generative capabilities of copula flows on three different tasks, generating the Clayton, Frank and Gumbel copula. Before we report our results we introduce the following metrics and divergences used to compare the distributions.
\subsection{Metrics and Divergences}
The first performance measure we track is the Jensen-Shannon divergence (JSD) of the targeted copula $C$ and the approximation $\tilde C$, which we denote by $\operatorname{JSD}(C \ \| \ \tilde C)$. Furthermore, to assess whether the marginal distributions generated by the copula flow are uniform we approximate via Monte Carlo for $i=1,2$ and $A_k = [{(k-1)}/{n}, \ k/n), \ k =1, \dots, n$ the metric
\[
\operatorname{T}(i, n) \coloneqq \dfrac{1}{n}\sum_{k = 1, \dots, n}\abs{\log{\Prob({\tilde C_i \in A_k})} + \log n}
\]
where $\tilde C_i = h_{\eta, i}(U)$. Last, we also compute the maximum of each density estimator
\[
\operatorname{M}(i, n) \coloneqq \max_{k = 1, \dots, n}\abs{\log\Prob({\tilde C_i \in A_k}) + \log n }.
\]
\subsection{Benchmarks}
The results obtained by training a copula flow on the different theoretical benchmarks can be viewed in \autoref{tab:benchmark_copula}. In order to optimize the copula flow we used a batch size of 3E+03. The training of the copula flow was stopped after a breaking criterion was obtained, which we defined by setting thresholds for each of the metrics in \autoref{tab:benchmark_copula}. The metrics $\operatorname{T}$ and $\operatorname{M}$ were evaluated by using a batch size of 5E+05, the NLL with a batch size equal to the training batch size, and the Jensen-Shannon divergence was obtained by evaluating the theoretical and approximated density on a (equidistant) mesh-grid of size $300 \times 300$.\footnote{Due to numerical instabilities of the theoretical copula densities evaluations with nans were discarded.}

\autoref{fig:clayton} and \autoref{fig:frank} illustrate the theoretical and approximated density on the Clayton and Frank copula benchmark. \autoref{fig:jsd_pointwise} in \autoref{appendix} depicts the Clayton, Gumbel and Frank copula when evaluating the JSD pointwise on a mesh-grid of size $300 \times 300$. Lighter colors depict areas that were approximated not as well.

\begin{figure}
	\begin{minipage}{.5\textwidth}
		\centering
		\begin{subfigure}{.4\textwidth}
			\centering
			\includegraphics[width=\textwidth]{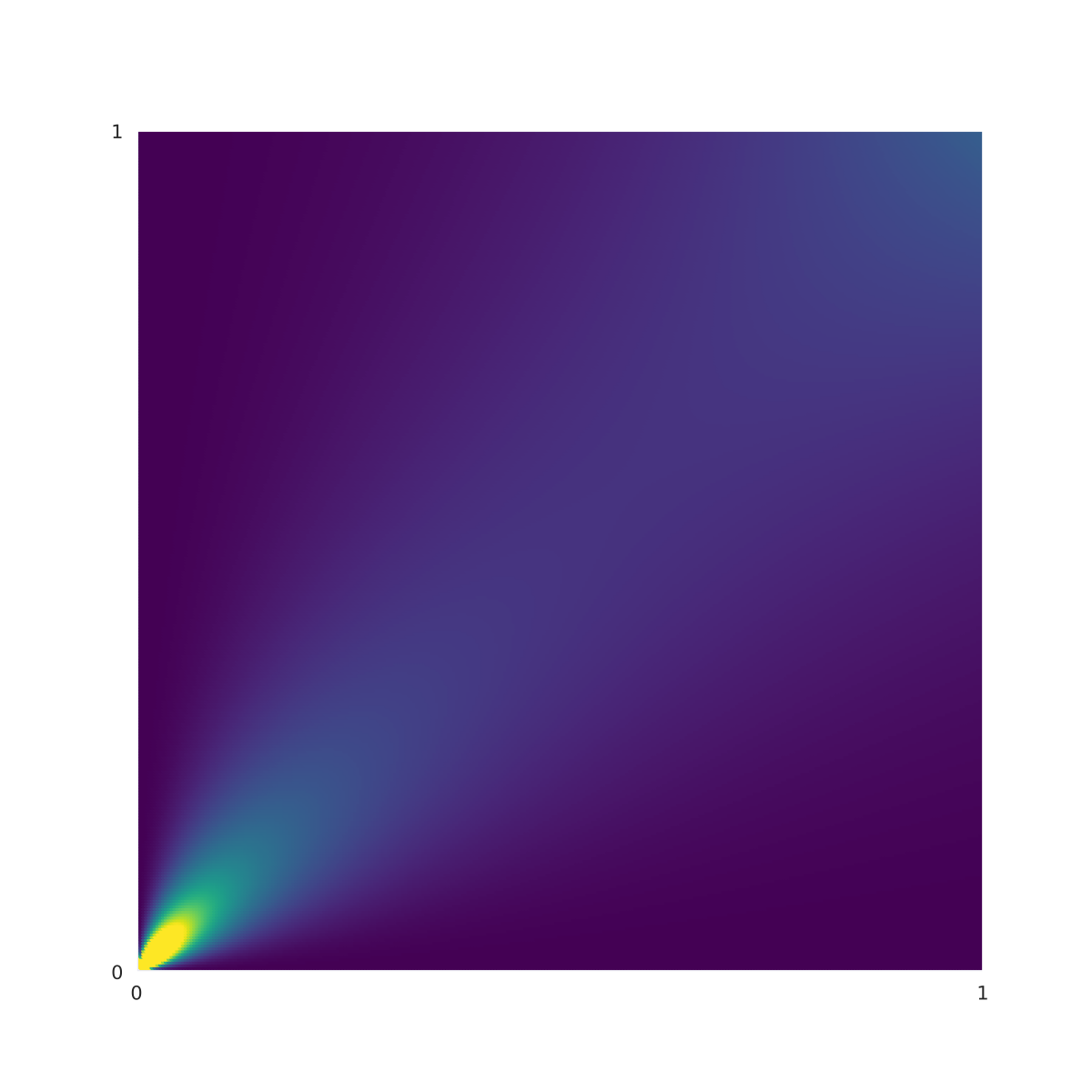}
			\caption{Theoretical}
			\label{fig:clayton_density}
		\end{subfigure}
		\begin{subfigure}{.4\textwidth}
			\centering
			\includegraphics[width=\textwidth]{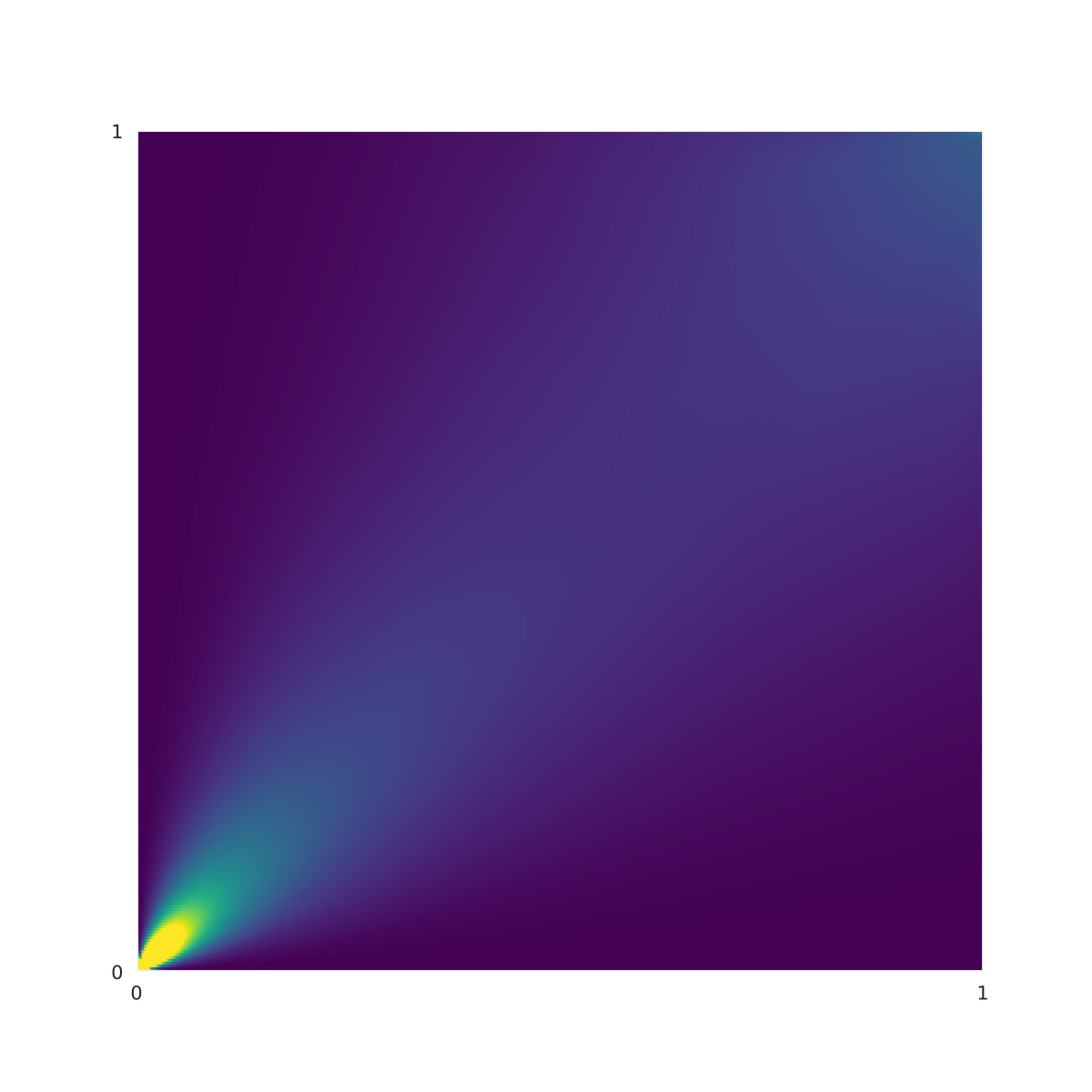}
			\caption{Approximation}
			\label{fig:copula_flow_clayton}
		\end{subfigure}
	\caption{Clayton Copula}
	\label{fig:clayton}
	\end{minipage}
\begin{minipage}{.5\textwidth}
	\centering
	\begin{subfigure}{.4\textwidth}
		\centering
		\includegraphics[width=\textwidth]{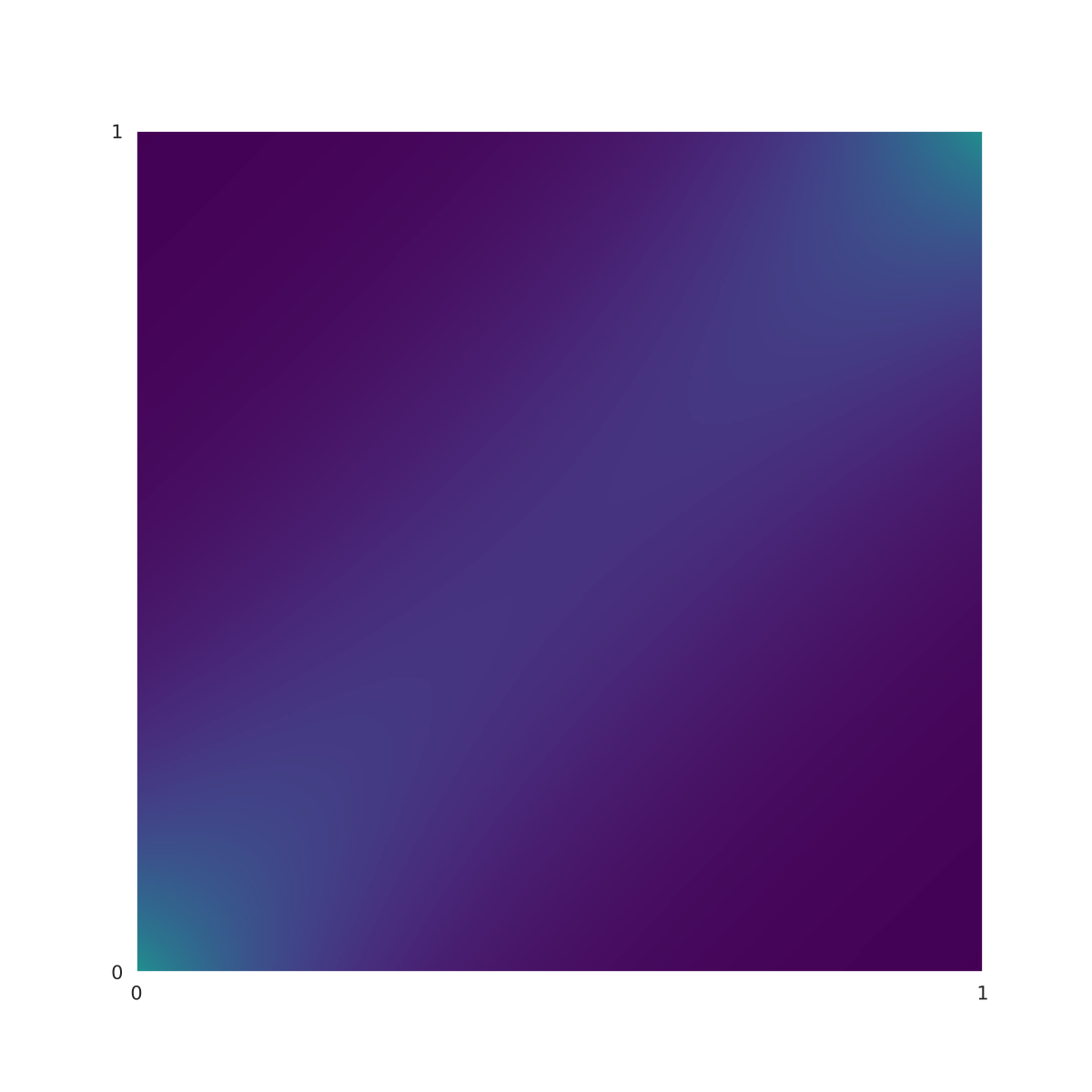}
		\caption{Theoretical}
		\label{fig:frank_density}
	\end{subfigure}
	\begin{subfigure}{.4\textwidth}
		\centering
		\includegraphics[width=\textwidth]{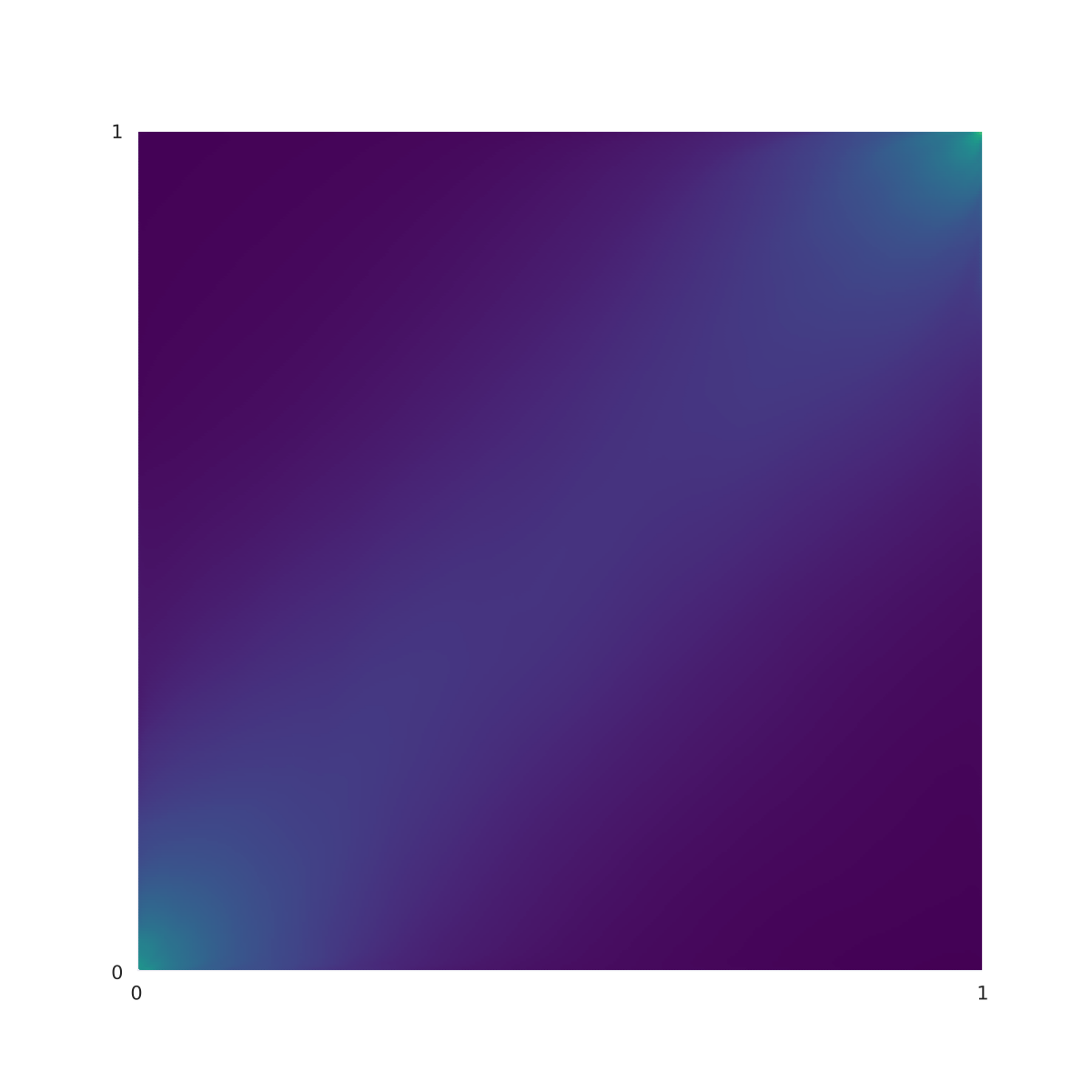}
		\caption{Approximation}
		\label{fig:copula_flow_frank}
	\end{subfigure}
	\caption{Frank Copula}
	\label{fig:frank}
\end{minipage}
\end{figure}

\begin{center}
	\begin{tabular}{ c c c c c c c}
		\hline
		Copula / Metrics & $\operatorname{JSD}( C \ \| \ \tilde C)$&  $\operatorname{T}(1, 25)$ & $\operatorname{T}(2, 25)$  & $\operatorname{M}(1, 25)$ & $\operatorname{M}(2, 25)$ & $\operatorname{NLL}$
		\\\clineB{1-7}{2.5}
		Clayton(2) &2.40E-04 & 1.17E-04 &1.27E-04 &4.04E-02& 5.06E-02& -4.41E-01
		\\\hline
		Frank(5) &6.89E-04&1.64E-04&1.71E-04 &4.98E-02&4.68E-02&-2.56E-01
		\\\hline
		Gumbel(5)* &5.96E-03 &1.18E-04&1.01E-04&3.44E-02&3.63E-02&-1.22E+00
		\\
		\clineB{1-7}{2.5}
	\end{tabular}
\captionof{table}{Performance figures obtained on four different copula benchmarks. The number in brackets after each copula type specifies the parameter being used. The star indicates that it was possible to apply a restricted bivariate copula flow.}
\label{tab:benchmark_copula}
\end{center}

\section{Conclusion}
\label{sec:conclusion}
In this paper we demonstrated that generative networks do not favor an exact modeling nor an estimation of the tail asymptotics. Since in various applications an exact modeling of the tail is of major importance we introduced and proposed CM flows. CM flows were explicitly constructed by using a marginal and copula flow which build up on the success of DDSFs and Real NVPs. The numerical results empirically demonstrated that bivariate copulas can be closely approximated by a copula flow and thus support the use of CM flows.

For CM flows to flourish we leave it as future work to correct the marginal distributions induced by a copula flow to be uniform. Once this is achieved, an exact modeling of the marginal distribution will be possible and tails can be modeled in unprecedented ways with deep generative flows. 

\printbibliography
\newpage
\appendix 
\section{Proofs}
\label{appendix}

\begin{proof}[Proof of \autoref{lemma:prob}]
	\begin{align*}
	\Prob\normalbrack{a\ \sum_{j=1}^{d_0} Z_j + b > x} &=
	1-\Prob\normalbrack{a \sum_{j=1}^{d_0} Z_j \leq {x-b}{}} \\
	& \leq 1 - \Prob\normalbrack{\bigcap_{j=1}^{d_0} \left\lbrace a \ Z_j \leq \dfrac{x - b}{d_0} \right\rbrace }\\
	&=\Prob\normalbrack{\bigcup_{j=1}^{d_0} \left\lbrace a \ Z_j > \dfrac{x - b}{d_0} \right\rbrace }\\
	&\leq d_0 \  \Prob\normalbrack{a \ Z_1 > \dfrac{x - b}{d_0}}\\
	&= d_0 \ \Prob\normalbrack{a \ d_0 \ Z_1 + b> x}.
	\end{align*}
\end{proof}
\begin{proof}[Proof of \autoref{cor:lighter_tail_prior}]
	Since $X_i$ is not affinely lighter tailed than $Z_1$ it can be shown that for any $\theta \in \Theta $ there exists an affine function $a_\theta:\R\to\R$ such that $a_\theta(X_i)$ is not affinely lighter tailed than $w_{\theta, i}(Z_1)$, where $w_{\theta, i}:\R \to \R$ is the function from \autoref{thm:tail_rate_bound}, implying 
	\[
	\Fbar_{\abs{w_{\theta, i}(Z_1)}} = o\normalbrack{\Fbar_{\abs{a_\theta(X_i)}}}.
	\]
	Then by \autoref{thm:tail_rate_bound} we obtain that for any $\theta \in \Theta$
	\[
	\Fbar_{\abs{g_{\theta, i}(Z)}} = o\normalbrack{\Fbar_{\abs{a_\theta(X_i)}}},
	\]
	which implies that the set of optimal generative networks $\mathcal{G}^*(Z, X)$ is empty. 
\end{proof}
\section{Basic Definitions}
\label{appendix_definitions}
\begin{definition}[Activation Function]
	\label{def:activation_function}
	A function $\phi:\R\to\R$ that is Lipschitz continuous, monotonic and satisfies $\phi(0)=0$ is called {\rm activation function}.
\end{definition}
\begin{remark}
	\autoref{def:activation_function} comprises a large class of functions found in literature \cite{maxout, prelu_he, relu_nair, efficient_backprop}.
\end{remark}
\begin{definition}[Deep Dense Sigmoidal Flow]
	\label{def:ddsf}
	Let $L, d_{0}, \dots, d_L \in \mathbb{N}$ such that $d_0 = d_L = 1$. Moreover, for $l = 1, \dots, L$ let
	$a^{(l)} \in \R^{d_{l}}_+, \  b^{(l)} \in \R^{d_{l}}$ and
	$w^{(l)} \in \R^{d_{l} \times d_{l}}, \ u^{(l)} \in \R^{d_{l} \times d_{l-1}}$ two non-negative matrices for which their
	row-wise sum is equal to $1$.
	Furthermore, let $\Psi: \R \to [0,1]$ define an invertible CDF.
	A function 
	\begin{align*}
		f:\R \times \Theta &\to \R\\
		(h^{(0)}, \theta) &\mapsto h^{(L)}
	\end{align*}
	where $h^{(L)}$ is defined recursively for $l = 1, \dots, L$ through
	\begin{equation}
	h^{(l)} = \Psi^{-1}\normalbrack{
		w^{(l)} \ 
		\Psi\normalbrack{
			{a}^{(l)} \odot u^{(l)} \  h^{(l-1)}
			+  b^{(l)}
		}
	}
	\label{eq:ddsf_map}
	\end{equation}
	is called a \textit{deep dense sigmoidal flow}.
\end{definition}
\newpage
\section{Additional Numerical Results}
\label{appendix_numerical_results}
\begin{figure}[htp]
	\centering
	\begin{subfigure}{.5\textwidth}
		\centering
		\includegraphics[width=\textwidth]{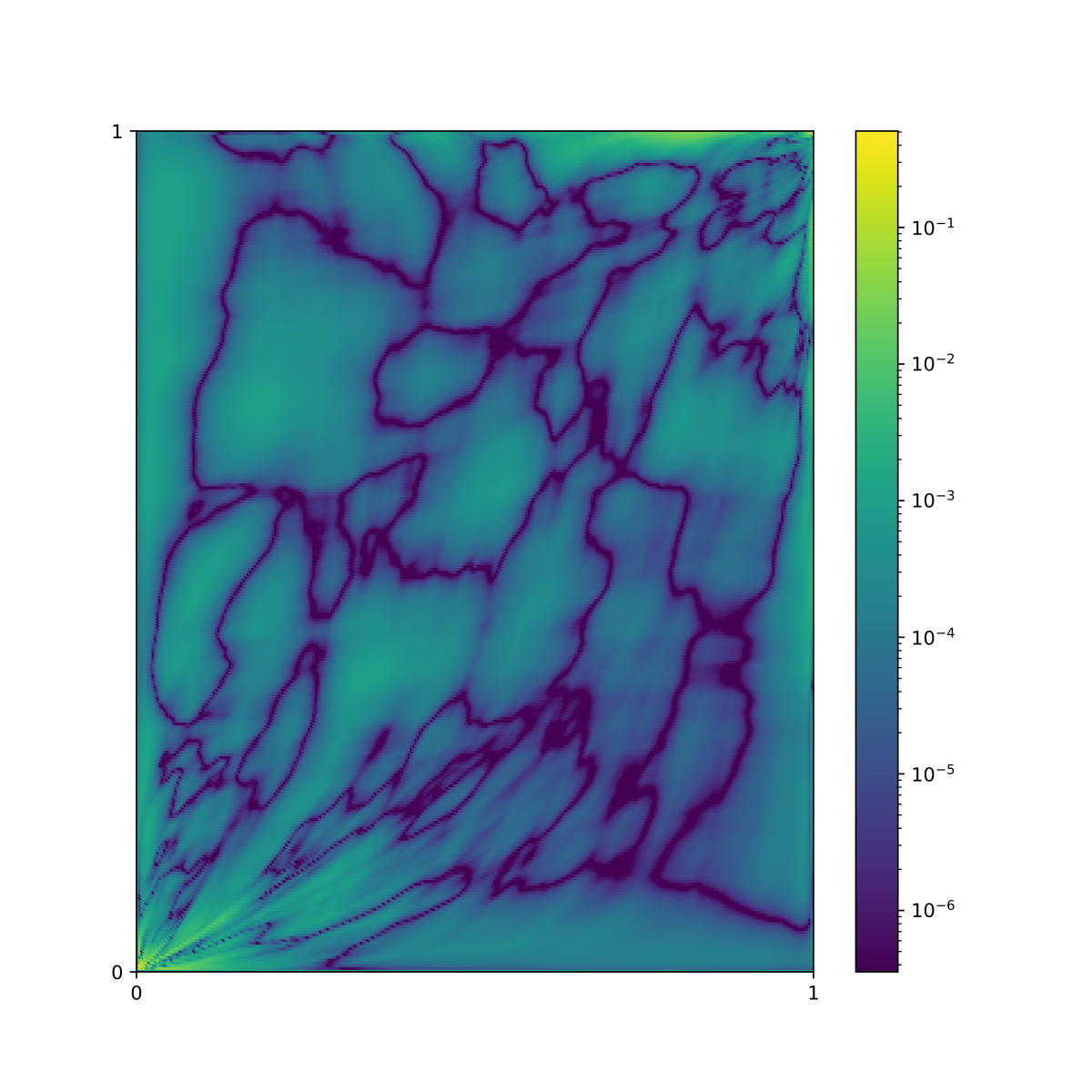}
		\caption{Clayton}
		\label{fig:s}
	\end{subfigure}
	\begin{subfigure}{.5\textwidth}
		\centering
		\includegraphics[width=\textwidth]{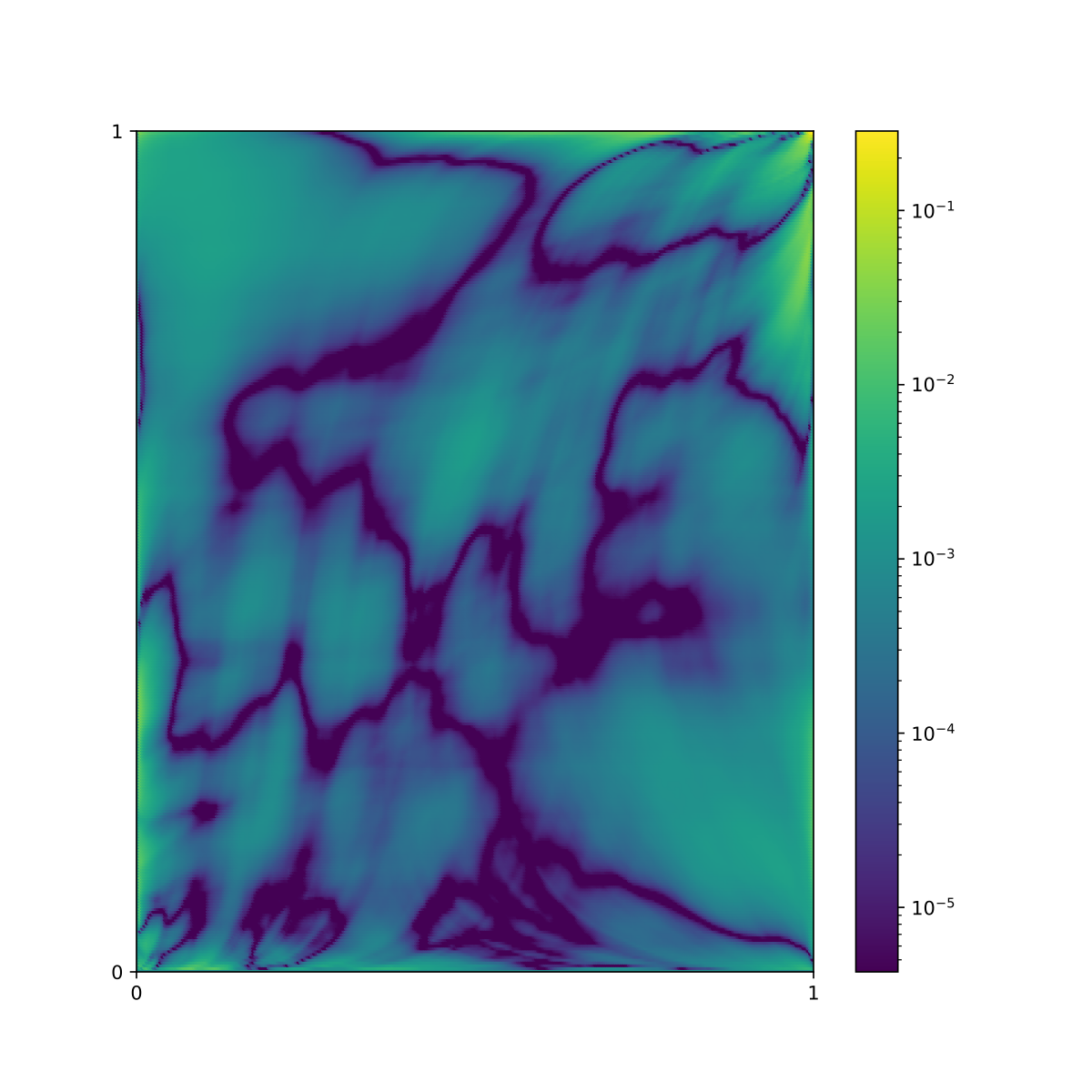}
		\caption{Frank}
		\label{fig:}
	\end{subfigure}
	\begin{subfigure}{.5\textwidth}
		\centering
		\includegraphics[width=\textwidth]{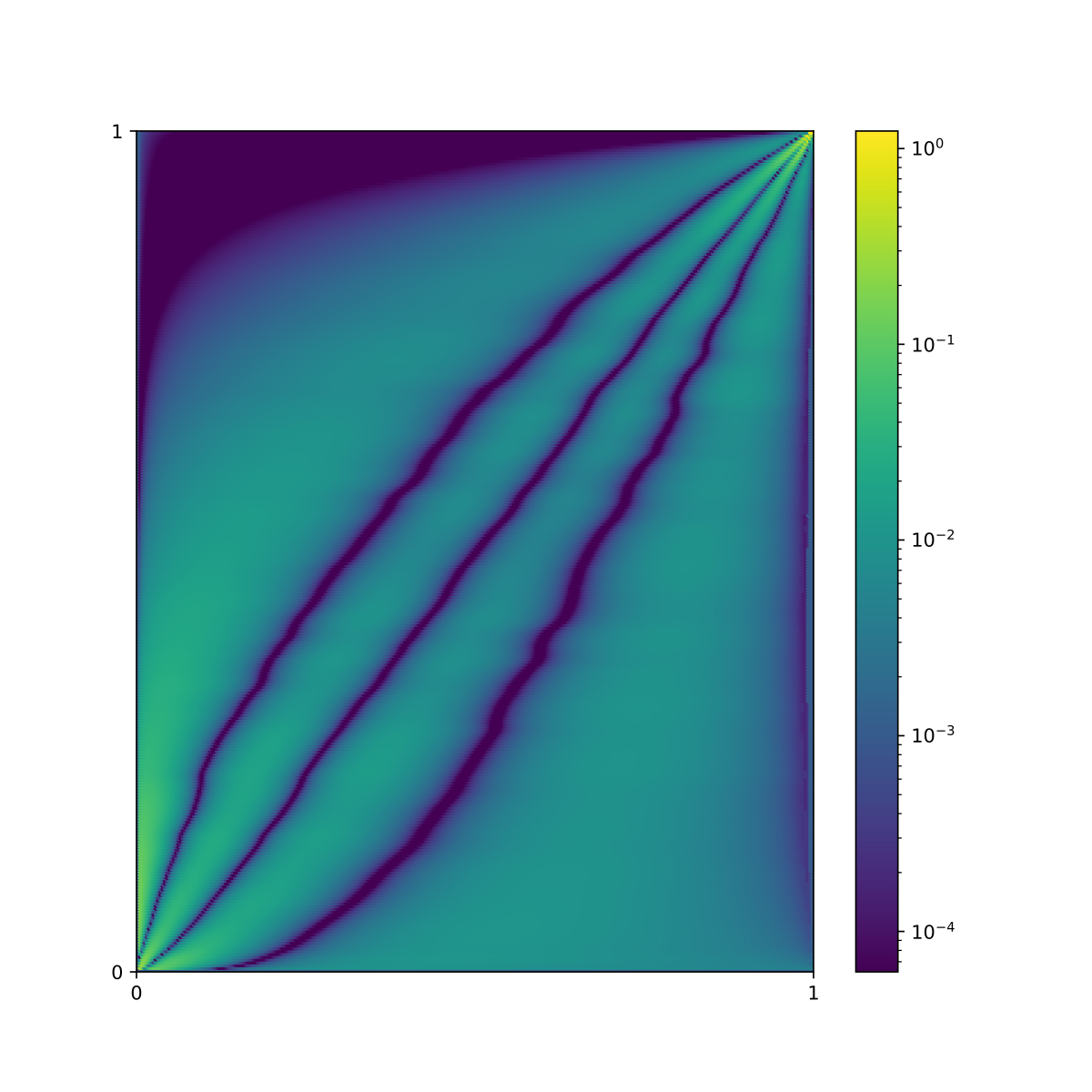}
		\caption{Gumbel}
		\label{fig:b}
	\end{subfigure}
	\caption{Pointwise evaluation of the JSD of the Clayton, Frank and Gumbel theoretical and copula flow densities.}
	\label{fig:jsd_pointwise}
\end{figure}

\end{document}